\renewcommand{\qedsymbol}{}
\newtheorem{theorem}{Theorem}
\newtheorem{lemma}{Lemma}
\DeclareMathOperator*{\argmin}{arg\,min}
\crefname{section}{Sec.}{Secs.}
\Crefname{section}{Section}{Sections}
\Crefname{table}{Table}{Tables}
\crefname{table}{Tab.}{Tabs.}
\begin{document}

\title{RODD: A Self-Supervised Approach for Robust Out-of-Distribution Detection}

\author{Umar Khalid, Ashkan Esmaeili, Nazmul Karim, and Nazanin Rahnavard \\
Department of Electrical and Computer Engineering\\
University of Central Florida, USA\\
{\tt\small umarkhalid@knights.ucf.edu, ashkan.esmaeili@ucf.edu, nazmul.karim18@knights.ucf.edu,} \\
{\tt\small nazanin.rahnavard@ucf.edu
}
 }

\maketitle

\begin{abstract}
\vspace{-1mm}
Recent studies have started to address the concern of detecting and rejecting the out-of-distribution (OOD) samples as a major challenge in the safe deployment of deep learning (DL) models. It is desired that the DL model should only be confident about the in-distribution (ID) data which reinforces the driving principle of the OOD detection. In this paper, we propose a simple yet effective generalized OOD detection method independent of out-of-distribution datasets. Our approach relies on self-supervised feature learning of the training samples, where the embeddings lie on a compact low-dimensional space. Motivated by the recent studies that show self-supervised adversarial contrastive learning helps robustify the model, we empirically show that a pre-trained model with self-supervised contrastive learning  yields a better model for uni-dimensional feature learning in the latent space. The method proposed in this work, referred to as \texttt{RODD}, outperforms SOTA detection performance on extensive suite of benchmark datasets on OOD detection tasks. On the CIFAR-100 benchmarks, \texttt{RODD} achieves a  26.97 $\%$ lower false positive rate (FPR@95) compared to SOTA methods. Our code is publicly available.\footnote {\url{https://github.com/UmarKhalidcs/RODD}}

\end{abstract}

\vspace{-5mm}
\section{Introduction}
In a real-world deployment, machine learning models are generally exposed to the \emph{out-of-distribution} (OOD) objects that they have not experienced during the training. Detecting such OOD samples is of paramount importance in safety-critical applications such as health-care and autonomous driving~\cite{filos2020can,khalid2022rf}.
Therefore, the researchers have started to address the issue of OOD detection more recently \cite{luan2021out,vyas2018out,jeong2020ood,chen2021atom,hsu2020generalized,huang2021mos,besnier2021triggering,zaeemzadeh2021out}. Most of the recent studies \cite{hendrycks2018deep,yu2019unsupervised,lee2017training,lee2018simple} on OOD detection use OOD data for the model regularization such that some distance metric between the ID and OOD distributions is maximized. In recent studies~\cite{pidhorskyi2018generative,shalev2018out}, generative models and auto-encoders have been proposed to tackle OOD detection. However, they require OOD samples for hyper-parameter tuning. In the real-world scenarios, OOD detectors  are distribution-agnostic. To overcome this limitation, some other methods that are independent of OOD data during the training process have been proposed~\cite{du2022vos,huang2021mos,sun2021react,zaeemzadeh2021out,yoshihashi2019classification,hsu2020generalized, joneidi2020select}. Such methods either use the membership probabilities \cite{du2022vos,hsu2020generalized,sun2021react,huang2021mos} or a feature embedding \cite{zaeemzadeh2021out,yoshihashi2019classification} to calculate an uncertainty score.  In \cite{yoshihashi2019classification}, the authors proposed to reconstruct the samples to produce a discriminate feature space. Similarly, \cite{du2022vos} proposed synthesizing virtual outliers to regularize the model’s decision boundary during training.
Nevertheless, the performance of the methods that rely on either reconstruction or generation \cite{yoshihashi2019classification,du2022vos,pidhorskyi2018generative} degrades on large-scale datasets or video classification scenarios.\\
\indent In this work, we claim that if the feature vectors belonging to each known class lie on a low-dimensional subspace, a representative singular vector can be calculated for each class that can be used to calculate uncertainty scores \cite{zaeemzadeh2021out}. In order to achieve such a compact representation of the features belonging to each class, we have leveraged contrastive learning as a pre-training tool that has improved the performance of the proposed robust out-of-distribution detector (\texttt{RODD}) as it has helped the better feature mapping in the latent space during the downstream fine-tuning stage~\cite{xue2022investigating,karim2021adversarial}. Self-supervised pre-training, where we use adversaries as a form of data augmentation, helps to raise the \texttt{RODD}'s performance in the settings with corrupted samples. This concept has been established by~\cite{chen2020adversarial,jiang2020robust,kim2020adversarial,ho2020contrastive,yi2021improved}  that a self-supervised contrastive adversarial learning can generate an adversarially robust model during the fine-tuning. The overall architecture of the \texttt{RODD} is shown in Fig. \ref{Methodfig}.
\begin{figure*}[t!]
\centering
\includegraphics[height=10cm, width=16cm,trim={.5cm 0.1cm 1.5cm 0cm}, clip]{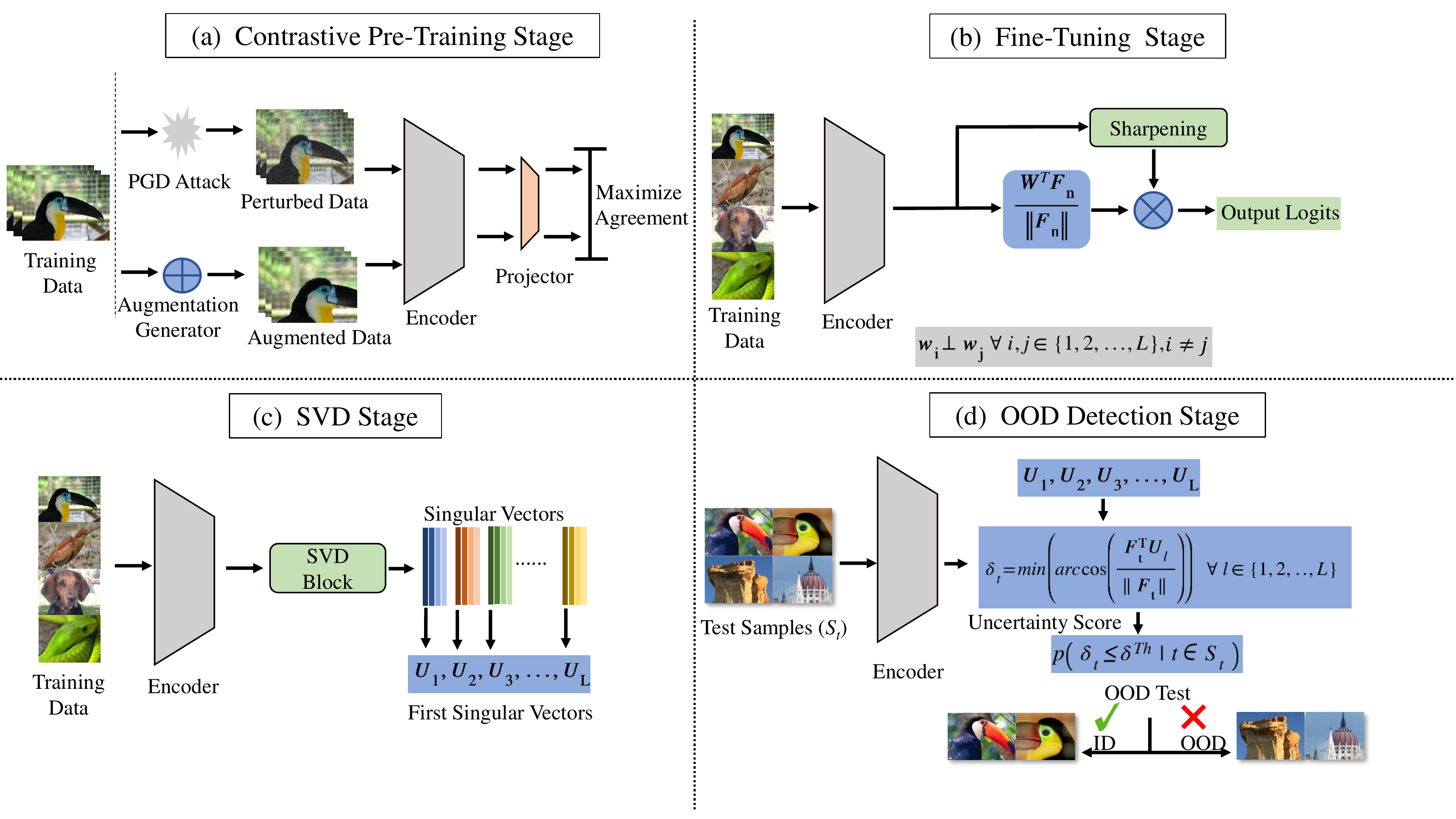}
\caption{\footnotesize{ Overall architecture of the proposed OOD detection method. \textcolor{red}{(a)} In the first step, self-supervised adversarial contrastive learning is performed. \textcolor{red} {(b)} Secondly, the encoder is fine-tuned by freezing the weights ($\mathbf W$) of the penultimate layer. The columns of $\mathbf W$ are initialized to be orthonormal.\textcolor{red}{(c)} Thirdly, employing singular value decomposition (SVD), we calculate the first singular vector of each class using its features. \textcolor{red}{(d)} The final step is the OOD detection, where an uncertainty score is estimated using cosine similarity between the feature vector $(\mathbf F_{t})$ representing the test sample $t$ and first singular vector of each ID class. Here,  BN represents Batch Normalization, $L$ is the number of classes, and $\delta^{th}$ is the threshold for the uncertainty score.}}
\label{Methodfig}
\vspace{-2mm}
\end{figure*}
\vspace{-2mm}

\indent In summary, we make the following contributions in this study. First, we propose that OOD detection test can be designed using the features extracted by self-supervised contrastive learning that reinforce the uni-dimensional projections of the ID set. 
Second, we have theoretically proved that such uni-dimensional projections, boosted by the contrastive learning, can be characterized by the prominent first singular vector that represents its corresponding class attributes.
Furthermore, the robustness of the proposed OOD detector has been evaluated by introducing corruptions in both OOD and ID datasets. Extensive  experiments  illustrate that the proposed OOD detection method outperforms the state-of-the-art (SOTA) algorithms.

\vspace{-2mm}
\section{Approach}
\label{sec:apprach}
\vspace{-2mm}
Our proposed OOD detection approach builds upon employing a  self-supervised training block to extract robust features from the ID dataset. This is carried out by training a contrastive loss on ID data as shown in Fig. \ref{Methodfig} (a). Next, we utilize the concept of \emph{union of one-dimensional-embeddings} to project the deep features of different classes onto one-dimensional and mutually orthogonal predefined vectors representing each class to obtain logits. At the final layer's output, we evaluate the cross-entropy between the logit probability output and the labels to form the supervised loss as shown in Fig. \ref{Methodfig} (b). The uni-dimensional mapping is carried out to guarantee that intra-class distribution consists of samples aligning the most with the uni-dimensional vector characterizing its samples. To this end, the penultimate layer of the model is modified by using cosine similarity and introducing a sharpening layer as shown in Fig. \ref{Methodfig} (b), where output logits are calculated as, $\small{{P}(\mathbf{F}_n) = {Z(\mathbf{F}_n) \over G(\mathbf{F}_n)}}$, where
\vspace{-3mm}
\begin{equation}
\small{
    Z(\mathbf{F}_n)= {\mathbf W^{T}\mathbf {F}_n \over  \lVert \mathbf {F}_n\lVert},
G(\mathbf{F}_n) =\sigma (BN(\mathbf W^{T}_{g} \mathbf F_{n}))}
\end{equation}

\vspace{-1mm}
\indent Here, $\mathbf  F_n$ represents the encoder output for the training sample $n$, $\sigma$ is the sigmoid function, and $ \mathbf W_{g}$ is the weight matrix for the sharpening layer, represented by $G(\mathbf F_n)$, which essentially maps $\mathbf F_n$ to a scalar value. In the sharpening layer, batch normalization (BN) is used for faster convergence as proposed by \cite{hsu2020generalized}. 
It is worth mentioning that during the fine-tuning stage, we do not calculate the \emph{bias} vector for the penultimate and sharpening layers. \\
\indent 
The \emph{orthogonality} comes with wide angles between the uni-dimensional embeddings of separates classes creating a large and expanded rejection region for the OOD samples if they lie in the vast inter-class space. To achieve this, we initialize the weight matrix $\mathbf W=[\mathbf w_l \mathbf w_2 \ldots \mathbf w_l]$ of the penultimate layer with orthonormal vectors as in \cite{saxe2013exact} and then freeze it during the fine-tuning stage. Here, $\mathbf w_l$ represents the weights of the last fully connected layer corresponding to class $l$. During fine-tuning, the features are projected onto the predefined set of orthogonal vectors $\mathbf w_l$ for $l=1,2,\ldots, L$, where $L$ is the number of ID classes.\\ 
\indent After training, OOD testing can be done by evaluating the inner products between the calculated first singular vectors  ($\mathbf {U_1,U_2,\ldots,U_L}$) representing their corresponding classes as shown in Fig. \ref{Methodfig} (c), and the extracted feature for the sample of interest. To perform OOD inspection on the test sample $t \in S_t$, where $S_t$ is the test set, the uncertainty score is calculated as,
\vspace{-2mm}
\begin{equation}
\small{ {\delta_t} = {\min (\arccos \left( {{\mathbf F_{t}^T \mathbf U_l} \over {\lVert \mathbf F_t \Vert}}\right))}, ~~\forall  ~~l \in \{1,2,\ldots,L\}   }
\vspace{-2mm}
\end{equation}

\indent Here, $\mathbf  F_t$ is the output of the encoder for the test sample $t$. The measured uncertainty is then used to calculate the probability that if $t$ belongs to ID or OOD using the probability function $p(\delta_t \leq\delta^{Th}|t \in S_t)$ as \texttt{RODD} is a probalistic approach where sampling is performed during the test time. In an ideal scenario, features of ID class $l$ have to be aligned with the corresponding $\mathbf w_l$, where  $\mathbf w_l$ is the $l^{th}$ column of matrix $\mathbf W$. In that case, $\delta^{Th}=0$. However, in practice, all class features are not  exactly aligned with their respective column in $\mathbf W$,  that further strengthens the idea of using the first singular vector of each class feature matrix, separately. \\
\indent Next, we will explain how the contrastive learning pre-training and sharpening module, $G(\mathbf F_n)$, boosts the performance of our approach. Firstly, contrastive learning has been beneficial because  we do not freeze the weights of the encoder after the self-supervised learning  and keep fine-tuning them along the training procedure using the cross-entropy loss. In other words, the features are warm-started with initialized values derived from the contrastive loss pre-training, yet the final objective function to optimize is composed of two terms $\mathcal{L}_{CL}+\mu \mathcal{L}_{LL},$
where $\mathcal{L}_{CL}$ and $\mathcal{L}_{LL }$ denote the contrastive and cross-entropy losses, respectively. In addition, the cross-entropy loss imposes the orthogonality assumption infused by the choice of orthogonal matrix containing union of  $\mathbf w_l~ \forall~  l \in \{1,2,\ldots,L\}$ each of which represent one class. By feeding the inner products of features with $\mathbf{W}$ into $\mathcal{L}_{LL}$, the features are endorsed to get reshaped to satisfy orthogonality and rotate to align $ \mathbf w_l$.\\ \indent Furthermore, augmenting the data of each class with the adversarial perturbations can improve classification perfromance on ID perturbed data while still detecting the OOD data~\cite{chen2020adversarial,kim2020adversarial}.
Moreover, prior to feeding the optimizer with the inner products for supervised training, we modify the uni-dimensional mappings using $G(\mathbf F_n)$ to optimally benefit from the self-supervised learned features. To compensate for the uni-dimensional confinement which can downgrade the classifier's performance, we use the \emph{sharpening} concept, where we enhance the confidence of the obtained logit vector by scaling the inner products with a factor denoted with the sharpening function $G(\mathbf{F}_n)$ explained above. 

\subsection {Theoretical Analysis }
\label{theory}
\vspace{-2mm}
In this section, we provide theoretical analyses on how pre-training with contrastive loss promotes the uni-dimensional embeddings approach utilized in \texttt{RODD}
by promoting one prominent singular vector (with a dominant singular value) in the deep feature extraction layer.\\
 \indent The objective function used in our optimization is composed of a contrastive loss and a softmax cross entropy. For simplicity, we use a least squared loss measuring the distance between linear prediction on a sample's extracted feature to its label vector $ \Vert \mathbf{W}^T\mathbf{F}_n-\mathbf{y}_n \Vert_2^2$ as a surrogate for the softmax cross entropy ($\mathcal{L}_{LL}$) \footnote{The least squared loss ($\mathcal{L}_{LL}$) measures the distance of the final layer predictions (assuming linear predictor in the deep feature space) from the one-hot encoded vector (alternatively logits if available)}.  This is justified in \cite{xue2022investigating}. 

Let $\mathbf{A}=[a_{i,j}]$ denote the adjacency matrix for the augmentation graph of training data formally defined as in \cite{xue2022investigating}.
 In general, two samples are connected through an edge on this graph if they are believed to be generated from the same class distribution. Without loss of generality, we assume that the adjacency matrix is block-diagonal, i.e., different classes are well-distinguished. Therefore, the problem can be partitioned into data specific to each class. Let $\mathbf{F}$ and $\mathbf{Y}$ denote the matrix of all features and label vectors, i.e., $\mathbf{F}_n$ and $\mathbf{y}_n$, where $n$ denotes the $n^{\text{th}}$ sample, respectively. 
The training loss including one term for contrastive learning loss and one for the supervised uni-dimensional embedding matching can be written as: \footnote{ It is shown in~\cite{haochen2021provable} that the solution to the contrastive learning loss can be written as the following Cholesky decomposition problem, $\min_{\mathbf{F}}\Vert \mathbf{A}- \mathbf{F}\mathbf{F}^T\Vert_F^2$, which constitutes the first term of the loss in Eq. \eqref{eq:the}.}
\begin{equation}
\small{
     \mathcal{L}(\mathbf{F})= 
    \underbrace{\Vert \mathbf{A}- \mathbf{F}\mathbf{F}^T\Vert_F^2}_{\mathcal{L}_{CL}(\mathbf{F})}+\mu \underbrace{\Vert \mathbf{W^T \mathbf{F}}  - \mathbf{Y}\Vert_F^2}_{\mathcal{L}_{LL}(\mathbf{F})}. 
}
\label{eq:the}
\vspace{-2mm}
\end{equation}

$\mathbf{Y}$ and $\mathbf{A}$ are given matrices, and $\mathbf{W}$ is fixed to some orthonormal predefined matrix. The optimization variable is therefore the matrix $\mathbf{F}$.     
Thus, the optimization problem can be written as:
\begin{equation}
\vspace{-2mm}
\small{
    \min_{\mathbf{F}}~~\Vert \mathbf{A}- \mathbf{F}\mathbf{F}^T\Vert_F^2+\mu \Vert \mathbf{W^TF-Y}\Vert_F^2}.
\label{prob:main}
\end{equation}
\vspace{-2mm}

Before bringing the main theorem, two assumptions are made on the structure of the adjacency matrix arising from its properties ~\cite{xue2022investigating}:
\textbf{1}:
For a triple of images $\mathbf{x}_i,\mathbf{x}_j,\mathbf{x}_s$, we have $\frac{a_{i,j}}{a_{j,s}} \in [\frac{1}{1+\delta},1+\delta]$ for small $\delta$, i.e., samples of the same class are similar.
\textbf{2:}
For a quadruple of images $\mathbf{x}_i,\mathbf{x}_j,\mathbf{x}_s, \mathbf{x}_t$, where $\mathbf{x}_i,\mathbf{x}_j$ are from different classes and  $\mathbf{x}_s,\mathbf{x}_t$ are from the same classes,  $\frac{a_{i,j}}{a_{s,t}} \leq \eta$ for small $\eta$.
\vspace{-1mm}
\begin{lemma}
Let $\mathbf{F}^*$ denote the solution to $\min_{\mathbf{F}} \mathcal{L}_{CL}$ (first loss term in \eqref{prob:main}). Assume $\mathbf{F}^*$ can be decomposed as
$\mathbf{F}^*=\mathbf{U}\mathbf{\Sigma} \mathbf{V}^T$.
Under Assumptions 1,2 (above), for $\mathbf{F}^*$ with singular values $\sigma_i$, we have $\sum_{i=2}^{N_l} \sigma_i^2 \leq \sqrt{6 \big ( (1+\delta)^\frac{3}{2}-1 \big)}$ for some small $\delta$, where $\sigma_i=\mathbf{\Sigma}_{ii}$, and $N_l$ is the number training samples of class $l$.
\label{lem:1}
\end{lemma}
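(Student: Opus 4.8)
The plan is to exploit the block-diagonal structure of $\mathbf{A}$. Since $\mathbf{A}$ is block-diagonal and, by Assumption 2, the cross-class affinities are negligible relative to within-class ones, the contrastive loss $\Vert \mathbf{A}-\mathbf{F}\mathbf{F}^T\Vert_F^2$ decouples into independent per-class problems. I would therefore restrict attention to the single diagonal block $\mathbf{A}_l\in\mathbb{R}^{N_l\times N_l}$ associated with class $l$, and argue that this block is close to a rank-one matrix so that its optimal factor $\mathbf{F}^*$ carries essentially one dominant squared singular value while the rest collapse.

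Next I would characterize the minimizer. For a fixed inner dimension, the unconstrained minimizer of $\Vert \mathbf{A}_l-\mathbf{F}\mathbf{F}^T\Vert_F^2$ is read off from the eigendecomposition $\mathbf{A}_l=\sum_i \lambda_i\,\mathbf{u}_i\mathbf{u}_i^T$ by retaining the positive eigenvalues (the standard Eckart--Young/best-PSD-approximation argument), so that the squared singular values of $\mathbf{F}^*$ satisfy $\sigma_i^2=\lambda_i$ for the surviving modes. This reduces the claim to a spectral-tail estimate: $\sum_{i=2}^{N_l}\sigma_i^2$ equals the sum of all but the leading eigenvalue of $\mathbf{A}_l$.

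The core estimate then uses Assumption 1. Because every ratio $a_{i,j}/a_{j,s}$ lies in $[\frac{1}{1+\delta},1+\delta]$, chaining through a shared index shows that all entries of $\mathbf{A}_l$ agree up to a small multiplicative factor; I would then write $\mathbf{A}_l=c\,\mathbf{1}\mathbf{1}^T+\mathbf{E}$ for a suitable reference value $c$ and bound $\Vert\mathbf{E}\Vert_F^2$ by a quantity of the form $6\big((1+\delta)^{3/2}-1\big)$, with the exponent $\tfrac{3}{2}$ and the factor $6$ tracking the entrywise comparisons needed to pass between arbitrary pairs. Since $c\,\mathbf{1}\mathbf{1}^T$ is rank one with a single positive eigenvalue, Weyl's inequality (equivalently optimality of the best rank-one approximation) forces the non-leading eigenvalues of $\mathbf{A}_l$ to be controlled by $\Vert\mathbf{E}\Vert_F$, and carrying the square root through yields $\sum_{i=2}^{N_l}\sigma_i^2\le\sqrt{6\big((1+\delta)^{3/2}-1\big)}$.

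The hard part will be pinning down the exact constant. Obtaining precisely $\sqrt{6\big((1+\delta)^{3/2}-1\big)}$, rather than a looser $O(\delta)$ bound, requires choosing $c$ so that the deviation $\mathbf{E}$ is as small as possible and accounting correctly for the powers of $(1+\delta)$ that appear when comparing entries sharing no index, as well as verifying that the PSD truncation in the minimizer does not change the count of surviving singular values. A secondary subtlety is the passage from the squared Frobenius deviation to the stated bound on $\sum_{i\ge 2}\sigma_i^2$: this conversion must run through Frobenius control of the tail spectrum, and the direction of each inequality has to be checked so that the single outer square root is indeed the right reduction.
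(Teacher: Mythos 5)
Your route is genuinely different from the paper's: the paper simply cites the result of Xue et al.\ that $\sum_{i=2}^{N_l}\sigma_i^4 \le 2\big((1+\delta)^{3/2}-1\big)$ for the minimizer of $\mathcal{L}_{CL}$, and then converts this to a bound on $\sum_{i=2}^{N_l}\sigma_i^2$ by squaring and applying Cauchy--Schwarz, whereas you attempt to re-derive the spectral-tail estimate from scratch via a near-rank-one decomposition $\mathbf{A}_l = c\,\mathbf{1}\mathbf{1}^T+\mathbf{E}$ of the class block. Your Eckart--Young reduction (the surviving squared singular values of $\mathbf{F}^*$ are the positive eigenvalues of $\mathbf{A}_l$, so $\sigma_i^2=\lambda_i$) is sound and matches what the paper uses implicitly when it writes the solution set as $\mathbf{Q}\mathbf{\Lambda}^{1/2}\mathbf{V}^T$.

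However, there is a genuine gap in your final step. Frobenius-norm perturbation of the rank-one matrix $c\,\mathbf{1}\mathbf{1}^T$ (via Weyl or Hoffman--Wielandt) controls $\sum_{i\ge 2}\lambda_i^2 \le \Vert\mathbf{E}\Vert_F^2$, i.e.\ it bounds $\sum_{i\ge 2}\sigma_i^4$ --- the \emph{squared} $\ell_2$ norm of the tail eigenvalues. The quantity in the lemma, $\sum_{i\ge 2}\sigma_i^2=\sum_{i\ge 2}\lambda_i$, is the $\ell_1$ norm of that tail, and ``carrying the square root through'' does not convert one into the other: you need Cauchy--Schwarz, which introduces a factor $\sqrt{N_l-1}$, exactly the step the paper performs (and which, incidentally, makes the stated dimension-free constant $\sqrt{6(\cdot)}$ questionable in both proofs). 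A second, related gap is that your bound $\Vert\mathbf{E}\Vert_F^2$ sums $N_l^2$ entrywise deviations, so without invoking the normalization of the augmentation-graph adjacency matrix (in the cited reference its entries form a probability distribution) the constant cannot come out independent of $N_l$; you flag the constant as ``the hard part'' but the issue is structural, not just bookkeeping. To close the argument you would either have to import the fourth-power bound you are implicitly re-proving and then do the $\ell_2\!\to\!\ell_1$ conversion explicitly, or carry the $N_l$-dependence honestly through both steps.
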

\begin{proof}
\vspace{-2mm}
In \cite{xue2022investigating}, it is shown that $\sum_{i=2}^{N_l} \sigma_i^4 \leq 2\big((1+\delta)^\frac{3}{2} -1) \big)$. 
The proof is straightforward powering $\sum_{i=2}^{N_l} \sigma_i^2$ by two and applying Cauchy-Schwartz inequality.

\vspace{-2mm}
\end{proof}
\vspace{-6mm}
\begin{theorem}
Let $\mathbf{F}^*$ denote the solution to ~\eqref{prob:main}. Assume $\mathbf{F}^*$ is decomposed as
$\mathbf{F}^*=\mathbf{U}\mathbf{\Sigma} \mathbf{V}^T$.
There exist a $\mu_{min}$ such that  $\sum_{i=2}^{N_l} \sigma_i^4 \leq 2\big((1+\delta)^\frac{3}{2} -1) \big)$, if $\mu<\mu_{min}$ in P \eqref{prob:main}.
\end{theorem}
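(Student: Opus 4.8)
The plan is to treat the full objective in \eqref{prob:main} as a small perturbation of the pure contrastive loss $\mathcal{L}_{CL}$ and to show that the tail singular-value bound of Lemma~\ref{lem:1} survives the perturbation when $\mu$ is small. Write $\mathbf{F}^*(\mu)$ for a minimizer of \eqref{prob:main}, let $\mathbf{F}_0^*$ denote a minimizer of $\mathcal{L}_{CL}$ alone, and let $d$ be the embedding dimension (the number of columns of $\mathbf{F}$). The claimed inequality $\sum_{i=2}^{N_l}\sigma_i^4 \leq 2\big((1+\delta)^{3/2}-1\big)$ is exactly the $\mu=0$ statement, which holds for $\mathbf{F}_0^*$ by the bound cited inside the proof of Lemma~\ref{lem:1}. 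It therefore suffices to prove that the tail functional $\Phi(\mathbf{F}) := \sum_{i=2}^{N_l}\sigma_i(\mathbf{F})^4$, evaluated at $\mathbf{F}^*(\mu)$, converges to its value at a contrastive-only minimizer as $\mu\downarrow 0$, and then to read off $\mu_{min}$.

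First I would record the elementary comparison coming from optimality of both matrices: since $\mathbf{F}^*(\mu)$ minimizes $\mathcal{L}_{CL}+\mu\mathcal{L}_{LL}$ and $\mathcal{L}_{LL}\geq 0$,
\begin{equation}
\mathcal{L}_{CL}(\mathbf{F}^*(\mu)) \leq \mathcal{L}_{CL}(\mathbf{F}_0^*) + \mu\,\mathcal{L}_{LL}(\mathbf{F}_0^*),
\end{equation}
so the excess contrastive loss of $\mathbf{F}^*(\mu)$ over the optimum is at most $\mu\,\mathcal{L}_{LL}(\mathbf{F}_0^*)\to 0$. Next I would invoke coercivity: because $\mathcal{L}_{CL}(\mathbf{F})=\Vert \mathbf{A}-\mathbf{F}\mathbf{F}^T\Vert_F^2$ grows like $\Vert\mathbf{F}\Vert^4$ and the penalty is nonnegative, every minimizer $\mathbf{F}^*(\mu)$ lies in a fixed bounded set for $\mu$ near $0$. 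Hence along any sequence $\mu_k\downarrow 0$ one can extract a convergent subsequence $\mathbf{F}^*(\mu_k)\to\bar{\mathbf{F}}$, and the displayed inequality forces $\bar{\mathbf{F}}$ to be a minimizer of $\mathcal{L}_{CL}$.

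The conclusion then follows from continuity of the singular values (Weyl's inequality makes each $\sigma_i(\cdot)$, and hence $\Phi$, continuous on the bounded set), together with the observation that every minimizer of $\mathcal{L}_{CL}$ yields the same $\mathbf{F}\mathbf{F}^T$, namely the best rank-$d$ approximation of the class block of $\mathbf{A}$, and therefore the same singular values $\sigma_i$; the only residual freedom is the right-orthogonal rotation $\mathbf{F}\mapsto\mathbf{F}\mathbf{Q}$, which leaves $\Phi$ invariant. Consequently $\Phi(\mathbf{F}^*(\mu))\to\Phi(\bar{\mathbf{F}})=\Phi(\mathbf{F}_0^*)\leq 2\big((1+\delta)^{3/2}-1\big)$, and $\mu_{min}$ is obtained by combining the modulus of continuity of $\Phi$ with the slack in the $\mu=0$ bound.

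The hard part will be the boundary case where the Lemma~\ref{lem:1} bound is attained with equality at $\mu=0$: there a first-order perturbation could in principle push $\Phi$ above the threshold, so one must argue either that the cited derivation under Assumptions 1--2 leaves strict slack (so a whole neighbourhood of $\mathbf{F}_0^*$ stays admissible), or that the added penalty $\mu\mathcal{L}_{LL}$ only redistributes energy toward the dominant class direction $\mathbf{w}_l$ and hence cannot increase the tail $\Phi$ to first order. A secondary technicality is guaranteeing a positive eigengap between the $d$-th and $(d+1)$-th eigenvalues of each class block of $\mathbf{A}$, which is what makes the best rank-$d$ approximation unique, pins down the singular values of $\bar{\mathbf{F}}$, and upgrades the subsequential limit to a genuine limit.
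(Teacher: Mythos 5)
Your route is genuinely different from the paper's. The paper partitions the search space into $\mathcal{S}$ (the exact zeros of $\mathcal{L}_{CL}$, i.e.\ matrices of the form $\mathbf{Q}\mathbf{\Lambda}^{1/2}\mathbf{V}^T$) and its complement, splits the complement according to whether $\mathcal{L}_{LL}$ vanishes, and then uses Cauchy--Schwarz and AM--GM on the cross term $\langle \mathbf{W}^T\mathbf{F}^*-\mathbf{Y},\mathbf{W}^T\mathbf{R}\rangle$ together with a contradiction argument on the perturbation $\mathbf{R}$ to exhibit an explicit threshold $\mu_{min}=\min\{\alpha^2/N_l^4,\ m_1/\mathcal{L}_{LL}(\tilde{\mathbf{F}})\}$ below which the global minimizer is claimed to lie in $\mathcal{S}$ and hence to inherit Lemma~\ref{lem:1} verbatim. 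You instead run a compactness/continuity-of-minimizers argument: optimality gives $\mathcal{L}_{CL}(\mathbf{F}^*(\mu))\le \mathcal{L}_{CL}(\mathbf{F}_0^*)+\mu\,\mathcal{L}_{LL}(\mathbf{F}_0^*)$, coercivity gives boundedness, subsequential limits are contrastive minimizers, and the tail functional $\Phi$ is continuous and constant on that minimizer set. This is arguably more faithful to what actually happens for $\mu>0$ (the minimizer generically leaves $\mathcal{S}$ the moment $\mu>0$, a point the paper's proof glosses over), but it is non-constructive: it produces no explicit $\mu_{min}$ in terms of $\alpha$, $N_l$, and $m_1$, which is the quantity the paper's argument is built to deliver.

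The genuine gap is the one you flag yourself and then leave unresolved. Your argument establishes $\Phi(\mathbf{F}^*(\mu))\to\Phi(\mathbf{F}_0^*)\le 2\big((1+\delta)^{3/2}-1\big)$, which only yields $\Phi(\mathbf{F}^*(\mu))\le 2\big((1+\delta)^{3/2}-1\big)+\epsilon$ for small $\mu$; to land on the stated inequality with no $\epsilon$ you need strict, quantified slack in the Lemma~\ref{lem:1} bound at $\mu=0$, and neither of your two escape hatches is carried out. The first (slack in the derivation under Assumptions 1--2) is plausible but must be extracted quantitatively, since the admissible $\mu_{min}$ would then depend on the size of that slack; the second (that the penalty only redistributes energy toward $\mathbf{w}_l$ and cannot raise $\Phi$ to first order) is a substantive structural claim about how $\mathcal{L}_{LL}$ interacts with the singular subspaces of $\mathbf{F}_0^*$, is not obviously true, and would itself require a perturbation computation comparable in effort to the paper's $\mathbf{R}$-argument. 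As written, the proposal proves a weakened statement (the bound plus an arbitrarily small additive constant) rather than the theorem. Your secondary eigengap worry, by contrast, dissolves in the paper's setting: since $\mathcal{L}_{CL}$ attains the value $0$ on $\mathcal{S}$, every contrastive minimizer satisfies $\mathbf{F}\mathbf{F}^T=\mathbf{A}$ exactly, so the singular values are $\sqrt{\lambda_i(\mathbf{A})}$ and are pinned down without any gap assumption.
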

\indent
The purpose is to show that treating corrupted or adversarial ID data vs. OOD data, the uni-dimensional embedding is robust in OOD rejection. This mandates invariance and stability of the first singular vector for the features extracted for samples generated from each class. 
The goal of this theorem is to show that using the contrastive loss along certain values of $\mu$ regularizing the logit loss, the dominance of the first eigenvector of the adjacency matrix is also inherited to the first singular vector of the $\mathbf{F}$ and this is inline with the mechanism of proposed approach whose functionality depends on the stability and dominance of the first singular vector because we desire most of the information included in the samples belonging to each class can be reflected in uni-dimensional projections. 

Assuming the dominance is held for the first singular value of each class data, the contrastive learning can therefore split them by summarizing the class-wise data into uni-dimensional separate representations. 
The $\mathbf{V}$ matrix is used to orthogonalize and rotate the uni-dimensional vectors obtained by contrastive learning to match the pre-defined orthogonal set of vectors $\mathbf{w}_l$ as much as possible. 

Now the proof for the main theorem is provided.
\vspace{-2mm}
\begin{proof}
$\mathbf{A}$ is Hermitian. Therefore, it can be decomposed as $\mathbf{A}=\mathbf{Q}\mathbf{\Lambda}\mathbf{Q}^T$.
The solution set to minimize $\mathcal{L}_{CL}$ is $\mathcal{S}=\{ \mathbf{Q}\mathbf{\Lambda}^{\frac{1}{2}} \mathbf{V}^T: \forall \texttt{~orthonormal matrix~} \mathbf{V} \}$ ($\lambda_i=\mathbf{\Lambda}_{ii}=\sigma_i^2$).\\ 
Let $L_1$ and $L_2$ be the minima for ~\eqref{prob:main} obtained on the sets $\mathcal{S}$ and $\mathcal{S}^c$, i.e., the complementary set of $\mathcal{S}$. $L_1$ equals $\mu \min_{\mathbf{F}\in \mathcal{S}}~\mathcal{L}_{LL}(\mathbf{F})$ as the first loss is 0 for elements in $\mathcal{S}$. \\
Now, we consider $L_2$. $\mathcal{S}^c$ can be partitioned into two sets $\mathcal{S}^c_1$ and  $\mathcal{S}^c_2$, where elements in  $\mathcal{S}^c_1$ set $\mathcal{L}_{LL}$ to zero and elements in $\mathcal{S}^c_2$ yield non-zero values for $\mathcal{L}_{LL}$. Therefore, $L_2$ is the minimum of the two partition's minima.
\vspace{-3mm}
\begin{equation}
    L_2=\min \big \{\underbrace{\min_{\mathbf{F} \in \mathcal{S}^c_1}~\mathcal{L}_{CL}(\mathbf{F})}_{LHS},\underbrace{\min_{\mathbf{F} \in \mathcal{S}^c_2}~\mathcal{L}_{CL}(\mathbf{F})+\mu \mathcal{L}_{LL}(\mathbf{F})}_{RHS} \big\}
    \label{eq:L2}
\end{equation}

\indent It is obvious that for a small enough $\mu$, $L_2$ equals the RHS above. This can be reasoned as follows. Let the LHS value be denoted with $m_1$. $m_1>0$ since $\mathcal{S}$ and $\mathcal{S}_1^c$ are disjoint sets with no sharing boundaries. 
The RHS in \eqref{eq:L2} is composed of two parts. The first part can be arbitrarily small because although $\mathcal{S}$ and $\mathcal{S}_2^c$ are disjoint, they are connected sets with sharing boundaries. (For instance any small perturbation in $\mathbf{\Lambda}$ eigenvalues drags a matrix from $\mathcal{S}$ into $\mathcal{S}_2^c$. However, they are infinitesimally close due to the continuity property). The second term can also be shrunk with an arbitrarily small choice of $\mu=\mu_{min}=\frac{m_1}{\mathcal{L}_{LL}(\tilde{\mathbf{F}})}$ that guarantees the RHS takes the minimum in Eq. \eqref{eq:L2}, where $\tilde{\mathbf{F}} = \underset{\mathbf{F} \in \mathcal{S}_2^c} {\argmin}~\mathcal{L}_{CL}(\mathbf{F})$ \footnote{(As discussed, $\Tilde{\mathbf{F}}$ makes the first term arbitrarily approach 0 due to continuity property holding between $\mathcal{S}$ and $\mathcal{S}_2^c$ and there is an element in $\mathcal{S}_2^c$ arbitrarily close to $\tilde{\mathbf{F}}$)}.
Therefore, for $\mu<\mu_{min}$, the minimum objective value in Eq. \eqref{prob:main} ($\min\{L_1,L_2\}$) is,
$
\min \big\{ \min_{\mathbf{F} \in \mathcal{S}^c_2}~\mathcal{L}_{CL}(\mathbf{F})+\mu \mathcal{L}_{LL}(\mathbf{F}), \min_{\mathbf{F} \in \mathbf{S}}~ \mu \mathcal{L}_{LL}(\mathbf{F})  \big\} 
\label{eq:min}$.
\indent 
The final aim is to show that $\mu$ can be chosen such that $\mathbf{F}^*$ inherits the dominance of first eigenvalue from $\mathbf{A}$. This is straightforward if the solution is RHS in \eqref{eq:L2} because the solution lies on $\mathcal{S}$ in that case and therefore, can be expressed as $\mathbf{Q}\mathbf{\Lambda}^\frac{1}{2}\mathbf{V}^T$ inheriting the property in Lemma~\ref{lem:1}.

\indent Thus, we first consider cases where $\min \{L_1,L_2\}$ is obtained by the RHS by explicitly writing when LHS$>$RHS.\\
We assume the minimizers for the RHS and LHS differ in a matrix $\mathbf{R}$. Let $\mathbf{F}^*$ denote the minimizer for RHS. Then, the minimizer of LHS is $\mathbf{F}^*+\mathbf{R}$.
We have \\
$\text{LHS}= \Vert \mathbf{A}-(\mathbf{F^*}+\mathbf{R})(\mathbf{F^*}+\mathbf{R})^T \Vert_F^2+ \mu \Vert \mathbf{W}^T \mathbf{F}^* + 
      \mathbf{W}^T\mathbf{R}-\mathbf{Y}\Vert_F^2 = \\
 {\Vert \underbrace{\mathbf{A}-\mathbf{F}^*\mathbf{F}^{*T}}_{0}}-\underbrace{(\mathbf{F}^*\mathbf{R}^T+\mathbf{R}\mathbf{F}^{*T}+\mathbf{RR}^T)}_{\mathbf{E}}\Vert_F^2+\mu \Vert \mathbf{W}^T\mathbf{F}^*-\mathbf{Y}+\mathbf{W}^T\mathbf{R} \Vert_F^2 = 
   \Vert \mathbf{E} \Vert _F^2 +\mu \Vert \mathbf{W}^T \mathbf{F}^*-\mathbf{Y}\Vert_F^2+\mu \Vert \mathbf{W}^T\mathbf{R}\Vert_F^2+2\mu \langle \mathbf{W}^T\mathbf{F}^*-\mathbf{Y}, \mathbf{W}^T\mathbf{R} \rangle,
   $\\
where the inner product of two matrices $\mathbf{A,B}$ ($\langle \mathbf{A,B}\rangle$) is defined as $Tr(\mathbf{AB}^T)$.
The RHS in~\eqref{eq:L2} equates $\mu \Vert \mathbf{W}^T\mathbf{F}^*-\mathbf{Y} \Vert_F^2$ since $\mathbf{F}^*$ is its minimizer and the loss has only the logit loss term. Thus, the condition $\small{\text{LHS}>\text{RHS}}$ reduces to 
$\small{
\Vert \mathbf{E} \Vert _F^2 + \mu \Vert \mathbf{W}^T\mathbf{R}\Vert_F^2+2\mu\langle \mathbf{W}^T\mathbf{F}^*-\mathbf{Y}, \mathbf{W}^T\mathbf{R} \rangle >0 
}$.
\vspace{-1mm}
Using the fact that the matrix $\mathbf{W}$ is predefined to be an orthonormal matrix, multiplying it by $\mathbf{R}$ does not change the Frobenius norm. Hence, the condition reduces to
$
\Vert \mathbf{E} \Vert _F^2 + \mu \Vert \mathbf{R}\Vert_F^2 > 2\mu \langle \mathbf{Y}-\mathbf{W}^T\mathbf{F}^*, \mathbf{W}^T\mathbf{R} \rangle 
$. 
\indent To establish this bound, the Cauchy-Schwartz inequality (C-S) and the Inequality of Arithmetic and Geometric Means (AM-GM) are used to obtain the upper bound for the inner product. The sufficient condition holds true if it is established for the obtained upper bound (tighter inequality).
Applying (C-S) and (AM-GM) inequalities we have 
\vspace{-3mm}
\begin{align*}
{\small{
    \begin{array}{cc}
\langle \mathbf{Y}-\mathbf{W}^T\mathbf{F}^*, \mathbf{W}^T\mathbf{R} \rangle \overset{\overset{C-S}{\rightarrow}}{\leq} \Vert \mathbf{Y}-\mathbf{W}^T\mathbf{F}^* \Vert_F \Vert \mathbf{W}^T\mathbf{R} \Vert_F=\\ \Vert \mathbf{Y}-\mathbf{W}^T\mathbf{F}^* \Vert_F \Vert \mathbf{R} \Vert_F  \overset{\overset{AM-GM}{\rightarrow}}{\leq} \frac{1}{2} \Vert \mathbf{Y}-\mathbf{W}^T\mathbf{F}^* \Vert_F^2 + \frac{1}{2} \Vert \mathbf{R} \Vert _F^2
    \end{array}
    \vspace{-4mm}
    }}
\end{align*}
Substituting this for the inner product to establish a tighter inequality, we get
$
\Vert \mathbf{E} \Vert _F^2 + \mu \Vert \mathbf{R}\Vert_F^2 > \mu \Vert \mathbf{Y}-\mathbf{W}^T\mathbf{F}^* \Vert_F^2 + \mu \Vert \mathbf{R} \Vert _F^2 
$
reducing to
$\Vert \mathbf{E} \Vert _F^2 > \mu \Vert \mathbf{Y}-\mathbf{W}^T\mathbf{F}^* \Vert_F^2$.\\
\indent As the matrix of all zeros, i.e., $[\bf{0}] \in \mathcal{S}$, inserting $[\bf{0}]$ for $\mathbf{F}$ leads to a trivial upper bound for the minimum obtained over $\mathbf{F} \in \mathcal{S}$, i.e., $\Vert \mathbf{Y}-\mathbf{W}^T\mathbf{F}^*\Vert_F^2$ is upper bounded with $\Vert \mathbf{Y} \Vert_F^2$. Finding a condition for $\Vert \mathbf{E} \Vert _F^2 > \mu_{min} \Vert \mathbf{Y} \Vert_F^2$ guarantees the desired condition is satisfied.
If $\Vert \mathbf{E} \Vert _F^2 > \mu_{min} \Vert \mathbf{Y} \Vert_F^2$ is met, the solution lies in $\mathcal{S}$ and RHS obtains the minimum, validating Lemma \ref{lem:1} for $\mathbf{F}^*$.\\
Otherwise, if the solution lies in $\mathcal{S}_2^c$ and is  attained from the LHS such that it contravenes the dominance of the first pricinpal component of $\mathbf{A}$, we will show by contradiction that the proper choice for $\mu$ avoids LHS to be less than the RHS in~\eqref{eq:L2}. To this end, we take a more profound look into $\Vert \mathbf{E} \Vert_F^2$.
If $\mathbf{R}$ is to perturb the solution $\mathbf{F}^*$ such that the first principal component is not prominent,  for $\mathbf{R}+\mathbf{F}^*$, we shall have $\sum_{i=2}^{N_l}~\sigma_i^2 > \Delta + \alpha$ for some positive $\alpha$ violating the condition stated in the Theorem. 
This means there is at least one singular value of $\mathbf{F}^*+\mathbf{R}$, for which we have $\sigma_r>\sqrt{\frac{\Delta+\alpha}{N_l-1}}= \sqrt{\frac{\alpha}{N_l-1}}+\mathcal{O}(\sqrt[4]{\delta})$.
As $\mathbf{F}^*$ inherits the square root of eigenvalues of $\mathbf{A}$, according to Lemma \ref{lem:1} and using Taylor series expansion, $\sigma_r(\mathbf{F}^*)=\mathcal{O}(\sqrt[4]{\delta})$. This yields $\sigma_r(R)> \sqrt{\frac{\alpha}{N_l-1}}+\mathcal{O}(\sqrt[4]{\delta}).$
$\mathbf{E}$ is a symmetric matrix and therefore it has eigenvalue decomposition.
$
 \Vert \mathbf{E} \Vert_F^2 \geq \lambda_r^2(\mathbf{E}) =  \lambda_r^2(\mathbf{RR}^T+\mathbf{RF}^{*T}+\mathbf{F}^*\mathbf{R}^T)= \\  \lambda_r^2(\mathbf{RR}^T)+\mathcal{O}(\delta) > \frac{\alpha^2}{(N_l-1)^2}+\mathcal{O}(\delta).
$
Knowing that $\Vert \mathbf{Y} \Vert_F^2 = N_l^2$, if $\mu < \frac{\alpha^2}{N_l^4}$, the condition for RHS$<$LHS is met.
According to Lemma \ref{lem:1} and the previous bound found for $\mu_{min}$, if $\mu_{min} < \min \{\frac{\alpha^2}{N_l^4},\frac{m_1}{\mathcal{L}_{LL}(\tilde{\mathbf{F}})}\}$, the solution should be $\mathbf{F}^*=\mathbf{Q\mathbf{\Lambda}}^\frac{1}{2}\mathbf{V}^T$.
Hence, for certain range of values for $\mu$, the solution takes the form $\mathbf{Q}\mathbf{\Lambda}^{\frac{1}{2}}\mathbf{V}$ obeying the dominance of $\lambda_1$ in $\mathbf{A}$ and this concludes the proof. 
\end{proof}
\vspace{-3mm}
\begin{table*}[h!]
       \caption{\small OOD detection results of {\texttt{RODD}} and comparison with competitive baselines trained on CIFAR-10 as ID dataset. All values are shown in percentages. $\uparrow$ indicates larger values are better and $\downarrow$ indicates smaller values are better.}
\centering
\scalebox{0.7}{
\begin{tabular}{lcccccccccccccccc}
\toprule
\multicolumn{1}{c}{\multirow{4}{*}{\textbf{Methods}}} & \multicolumn{14}{c}{\textbf{OOD Datasets}} \\ \cline{2-15}
\multicolumn{1}{c}{}  & \multicolumn{2}{c}{\textbf{SVHN}} & \multicolumn{2}{c}{\textbf{iSUN}}& \multicolumn{2}{c}{\textbf{LSUNr}}&\multicolumn{2}{c}{\textbf{TINc}}& \multicolumn{2}{c}{\textbf{TINr}} & \multicolumn{2}{c}{\textbf{Places}} & \multicolumn{2}{c}{\textbf{Textures}} \\
\multicolumn{1}{c}{}  & FPR95 & AUROC & FPR95 & AUROC& FPR95 & AUROC& FPR95 & AUROC& FPR95 & AUROC& FPR95 & AUROC & FPR95 & AUROC\\
\multicolumn{1}{c}{} & \multicolumn{1}{c}{$\downarrow$} & \multicolumn{1}{c}{$\uparrow$} & \multicolumn{1}{c}{$\downarrow$} & \multicolumn{1}{c}{$\uparrow$} & \multicolumn{1}{c}{$\downarrow$} & \multicolumn{1}{c}{$\uparrow$} & \multicolumn{1}{c}{$\downarrow$} & \multicolumn{1}{c}{$\uparrow$} & \multicolumn{1}{c}{$\downarrow$} & \multicolumn{1}{c}{$\uparrow$}& \multicolumn{1}{c}{$\uparrow$} & \multicolumn{1}{c}{$\downarrow$}& \multicolumn{1}{c}{$\uparrow$} & \multicolumn{1}{c}{$\downarrow$} \\
\midrule
MSP ~\cite{Kevin} & 48.49 & 91.89 & 56.03 & 89.83 & 52.15 & 91.37 & 53.15 & 87.33 & 54.24  & 79.35 & 59.48 & 88.20 & 59.28 & 88.50\\
 ODIN ~\cite{liang2017enhancing} & 33.55 & 91.96 & 32.05 & 93.50 & 26.52   & 94.57 &36.75 &89.20 &49.15 &81.64& 57.40 & 84.49 & 49.12 & 84.97 \\
 Mahalanobis \cite{lee2018simple} & 12.89 & 97.62 & 44.18 & 92.66 & 42.62 & 93.23 & 42.75 & 88.85  & 52.25  & 80.33 & 92.38 & 33.06 & 15.00 & 97.33 \\
 Energy ~\cite{liu2020energy} & 35.59 & 90.96 & 33.68 & 92.62 & 27.58 & 94.24 & 35.69  & 89.05 & 50.45  & 81.33 & 40.14 & 89.89 & 52.79 & 85.22  \\
OE ~\cite{hendrycks2018deep} & 4.36 & 98.63 & 6.32 & 98.85 & 5.59 & 98.94 & 13.45 & 96.44 & 15.67 & 96.78 & 19.07 & 96.16 & 12.94 & 97.73 \\
VOS ~\cite{du2022vos} & 8.65 & 98.51 & 7.56 & 98.71 & 14.62 & 97.18 & 11.76 & 97.58 & 28.08 & 94.26 & 37.61 & 90.42 & 47.09 & 86.64 \\
FS ~\cite{zaeemzadeh2021out} & 24.71 & 95.31 & 17.41 & 96.61 & 4.84 & 96.28 & 12.45 & 97.83 & 9.65 & 97.95 & 11.56 & 96.42 & 5.55 & 98.64 \\
\textbf{\texttt{RODD} (Ours)} & \textbf{1.82} & \textbf{99.63} & \textbf{4.07} & \textbf{99.32} & \textbf{4.49} & \textbf{99.25} & \textbf{10.29} & \textbf{98.10 } & \textbf{6.30} & \textbf{99.0 } & \textbf{9.59} & \textbf{98.47 } & \textbf{3.87} & \textbf{99.43}\\
\hline
\vspace{-5mm} 
\end{tabular}
}
\label{tab:table1}
\end{table*}
\vspace{4mm}
\begin{table*}[h!]
       \caption{\small OOD detection results of \texttt{RODD} and comparison with competitive baselines trained on CIFAR-100 as ID dataset. All values are shown in percentages. $\uparrow$\ indicates larger values are better and $\downarrow$ indicates smaller values are better.}
\centering
\scalebox{0.7}{
\begin{tabular}{lcccccccccccccccc}
\toprule
\multicolumn{1}{c}{\multirow{4}{*}{\textbf{Methods}}} & \multicolumn{14}{c}{\textbf{OOD Datasets}} \\ \cline{2-15}
\multicolumn{1}{c}{}  & \multicolumn{2}{c}{\textbf{SVHN}} & \multicolumn{2}{c}{\textbf{iSUN}}&\multicolumn{2}{c}{\textbf{LSUNr}}&\multicolumn{2}{c}{\textbf{TINc}}&\multicolumn{2}{c}{\textbf{TINr}} & \multicolumn{2}{c}{\textbf{Places}} & \multicolumn{2}{c}{\textbf{Textures}} \\
\multicolumn{1}{c}{}  & FPR95 & AUROC & FPR95 & AUROC& FPR95 & AUROC& FPR95 & AUROC& FPR95 & AUROC& FPR95 & AUROC & FPR95 & AUROC\\
\multicolumn{1}{c}{} & \multicolumn{1}{c}{$\downarrow$} & \multicolumn{1}{c}{$\uparrow$} & \multicolumn{1}{c}{$\downarrow$} & \multicolumn{1}{c}{$\uparrow$} & \multicolumn{1}{c}{$\downarrow$} & \multicolumn{1}{c}{$\uparrow$} & \multicolumn{1}{c}{$\downarrow$} & \multicolumn{1}{c}{$\uparrow$} & \multicolumn{1}{c}{$\downarrow$} & \multicolumn{1}{c}{$\uparrow$}& \multicolumn{1}{c}{$\uparrow$} & \multicolumn{1}{c}{$\downarrow$}& \multicolumn{1}{c}{$\uparrow$} & \multicolumn{1}{c}{$\downarrow$} \\
\midrule
MSP ~\cite{Kevin} & 84.59 & 71.44 & 82.80 & 75.46 & 82.42 & 75.38 & 69.82 &79.77 &79.95  &72.36 & 82.84 & 73.78 & 83.29 & 73.34\\
 ODIN ~\cite{liang2017enhancing} & 84.66 & 67.26 & 68.51 & 82.69 & 71.96   & 81.82 &45.55 &87.77 &57.34 &80.88& 87.88 & 71.63 & 49.12 & 84.97 \\
 Mahalanobis \cite{lee2018simple} & 57.52 & 86.01 & 26.10 & 94.58 & 21.23 & 96.00 & 43.45  &86.65    &44.45&85.68 & 88.83 & 67.87 & 39.39 & 90.57 \\
 Energy ~\cite{liu2020energy} & 85.52 & 73.99 & 81.04 & 78.91 & 79.47 & 79.23 & 68.85  &78.85 &77.65  & 74.56 & 40.14 & 89.89 & 52.79 & 85.22  \\
OE ~\cite{hendrycks2018deep} & 65.91 & 86.66 & 72.39 & 78.61 & 69.36 & 79.71 &46.75 &85.45 &78.76 &75.89 & 57.92 & 85.78 & 61.11 & 84.56 \\
VOS ~\cite{du2022vos} & 65.56 & 87.86 & 74.65 & 82.12 & 70.58 & 83.76 & 47.16 & 90.98 & 73.78 & 81.58 & 84.45 & 72.20 & 82.43 & 76.95 \\
FS ~\cite{zaeemzadeh2021out} & 22.75 & 94.33 & 45.45 & 85.61 & 40.52 & 87.21 & \textbf{11.76} & \textbf{97.58} & 44.08 & 86.26 & 47.61 & 88.42 & 47.09 & 86.64 \\
\textbf{\texttt{RODD} (Ours)} & \textbf{19.89} & \textbf{95.76} & \textbf{39.79} & \textbf{88.40} & \textbf{36.61} & \textbf{89.73} & 44.42  & 85.95 & \textbf{42.56} & \textbf{87.67 } & \textbf{41.72} & \textbf{89.10 }& \textbf{24.64} & \textbf{94.14} \\
\hline
\end{tabular}
}
\label{tab:table2}
\end{table*}
\begin{figure*}[h!]
\centering
\includegraphics[height=5cm, width=15cm,trim={5.3cm 7cm 6.5cm 4cm}, clip]{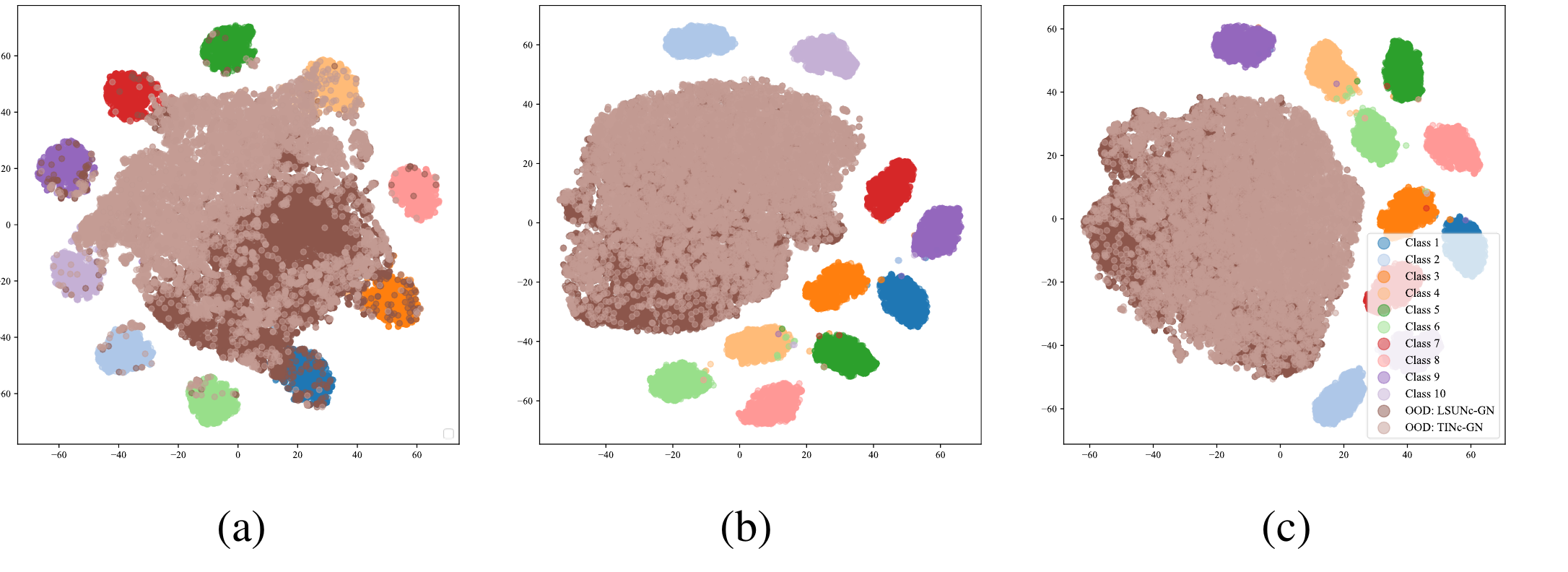}
 \vspace{-3mm}
\caption{\footnotesize t-SNE representation of features extracted by introducing Gaussian noise on OOD dataset. 10,000 samples each of TINc and LSUNc while 1,000 sample of each class from ID CIFAR-10 test set are used to generate 2D t-SNE plot. \textcolor{red}{(a)} Features extracted from the baseline model with severity level 1.
\textcolor{red}{(b)} Features extracted using \texttt{RODD} with corruption severity level 1. 
\textcolor{red}{(c)} Features extracted from the \texttt{RODD} with corruption severity level 5. 
}\label{tsne}
\end{figure*}
\begin{table*}[h!]
				\vspace{-1mm}
		\caption{\footnotesize Evaluation using corrupted ID test samples for CIFAR-100. All values are in $\%$  and averaged over 7 OOD datasets discussed in Section \ref{datasets} whereas corruption severity is varied from 1-5 as in \cite{hendrycks2019robustness}. $\uparrow$  indicates larger values are better and $\downarrow$ indicates smaller values are better. }
		\vspace{-3mm}
		\centering
		\scalebox{0.645}{
			{
				\begin{tabular}{l|c|c|ccc|ccc|cccc|cccc}
					\hline
	
					\multirow{2}{*}{Dataset}             & \multirow{2}{*}{Method} & \multirow{2}{*}{Clean} &              \multicolumn{3}{c|}{Noise}               &                     \multicolumn{3}{c|}{Blur}                     &                   \multicolumn{4}{c|}{Weather}                    & \multicolumn{4}{c}{Digital}                                    \\
					&                         &                        &       Gauss       &       Shot       &    Impulse     &    Defocus     &         Motion     &      Zoom      &      Snow      &     Frost      &      Fog       &     Bright     &    Cont.    &    Elastic     &     Pixel      &      JPEG       \\ \hline
					
					\multirow{2}{*}{$\downarrow$FPR95}  &           VOS           &         66.79                 &      72.55       &     76.95      &     90.36            &     84.50      &     83.62      &     84.56      &     87.0      & 83.34 & 83.84 & 86.11 &     86.67      &     85.81      &     89.58  &89.25            \\
					&     RODD      &   \bf{39.76}     &      \bf{67.91} & \bf{65.42}&\bf{65.53}& \bf{49.51}& \bf{71.81}& \bf{55.87}& \bf{53.92}& \bf{59.84}&\bf{ 52.23}& \bf{48.39}&\bf{ 52.98}&\bf{ 57.31}&\bf{ 55.42}& \bf{66.47}       \\
					 \hline
					\multirow{2}{*}{$\uparrow$AUROC} &           VOS          &     81.9     &      74.26        &      72.90       &     60.00      &      68.35            &     69.83      &      68.55      &      65.31      &     68.14      &     68.50            &      66.54      &     66.82      &     66.98      &     61.18      &  62.38        \\
					&     RODD      &        \bf{ 88.1 }         & \bf{ 77.18 } &\bf{78.40} &  \bf{78.41}  & \bf{84.70} & \bf{74.64}  &\bf{ 82.42} &     \bf{83.50 }     & \bf{80.60} & \bf{83.85} & \bf{85.54} & \bf{83.44} &      \bf{ 81.91}      &   \bf{  83.11 }     &  \bf{  78.19}        
					       \\ \hline
		\end{tabular}}}
		\label{tab:table3}
	\end{table*}
\begin{table*}[h!]
		 		\vspace{-2mm}
		\caption{\footnotesize Clean and corruption accuracy (\%) of \texttt{RODD} and Baseline  on   CIFAR10-C and CIFAR100-C.}
		\label{tabel7}
		\centering
		\scalebox{0.645}{
			{
				\begin{tabular}{l|c|c|ccc|ccc|cccc|cccc}
					\hline
	
					\multirow{2}{*}{Dataset}             & \multirow{2}{*}{Method} & \multirow{2}{*}{Clean} &              \multicolumn{3}{c|}{Noise}               &                     \multicolumn{3}{c|}{Blur}                     &                   \multicolumn{4}{c|}{Weather}                    & \multicolumn{4}{c}{Digital}                                    \\
					&                         &                        &       Gauss       &       Shot       &    Impulse     &    Defocus     &         Motion     &      Zoom      &      Snow      &     Frost      &      Fog       &     Bright     &    Cont.    &    Elastic     &     Pixel      &      JPEG       \\ \hline
					
					\multirow{2}{*}{CIFAR10-C}  &           Baseline           &         \textbf{94.52}          &       46.54       &      57.72       &     \textbf{56.45}      &     69.15            &     62.98      &     58.85      &     74.88      &     72.18      & 84.26 & 92.19 & \textbf{75.14} &     74.31      &     68.27      &     77.34              \\
					&     RODD      &   94.45     &     \textbf{  49.63}       &      \textbf{59.89}       &     55.62      &    \textbf{ 69.77}                 &     \textbf{64.81}  & \textbf{61.79}    & \textbf{
					78.59}      &     \textbf{74.48}      &     \textbf{86.56}      &    \textbf {93.08}      &     73.37      &    \textbf{ 75.49}      &     \textbf{70.79}      &    \textbf{80.12}            \\
					 \hline
					\multirow{2}{*}{CIFAR100-C} &           Baseline           &    \textbf{ 72.35 }    &       18.80       &      26.56       &     25.56      &      49.80            &     40.45      &      39.37      &      45.38      &     42.62      &     56.40            &      69.14      &     \textbf{52.87}      &     48.32      &     40.70      &  46.11        \\
					&     RODD      &         72.20          &  \textbf{18.40}   &  \textbf{27.13}  & \textbf{26.25} & \textbf{50.32}       & \textbf{41.82} &    \textbf{40.40}      & \textbf{46.25} & \textbf{43.46} & \textbf{57.13} & \textbf{70.0} & 51.81 &     \textbf{49.05  }    &    \textbf{ 40.86}      &     \textbf{47.62 }       
					       \\ \hline
		\end{tabular}}}
		\label{tab:table4}
		\vspace{-5mm}
	\end{table*}
\vspace{-4mm}
\section{Experiments}\label{experiments}
\vspace{-1mm}
In this section, we evaluate our proposed OOD detection method through extensive experimentation on different ID and OOD datasets with multiple architectures. 
\vspace{-2mm}
\subsection{Datasets and Architecture} \label{datasets}
\vspace{-1mm}
In our experiments, we used CIFAR-10 and CIFAR-100\cite{krizhevsky2009learning} as ID datasets and 7 OOD datasets. 
OOD datasets utilized are TinyImageNet-crop (TINc), TinyImageNet-resize(TINr)\cite{deng2009imagenet}, LSUN-resize (LSUN-r) \cite{yu2015lsun},  Places\cite{zhou2017places}, Textures\cite{cimpoi2014describing}, SVHN\cite{netzer2011reading} and iSUN\cite{xu2015turkergaze}. For an architecture, we deployed WideResNet~\cite{zagoruyko2016wide} with depth and width equal to 40 and 2, respectively, as an encoder in our experiments. However, the penultimate layer has been modified as compared to the baseline architecture as shown in Fig. \ref{Methodfig}.

\vspace{-1mm}
\subsection{Evaluation Metrics and Inference Criterion} \label{metrics}
\vspace{-1mm}
As in \cite{sun2021react,du2022vos}, the OOD detection performance of \texttt{RODD} is evaluated  using the following metrics: {\emph{(i) FPR95}}
indicates the false positive rate (FPR) at $95\%$ true positive
rate (TPR) and {\emph{(ii) AUROC}}, which is defined as the Area Under the Receiver Operating Characteristic curve. 
As \texttt{RODD} is a probabilistic approach, sampling is preformed on the ID and OOD data during the test time to ensure the probabilistic settings. We employ Monte Carlo sampling to estimate  $p (\delta_{t}\leq \delta^{Th}) $ for OOD detection, where $\delta^{Th}$ is the uncertainty score threshold calculated using training samples. During inference, 50 samples are drawn for a given sample, $t$. The evaluation metrics are then applied on ID test data and OOD data using the estimated $\delta^{Th}$ to calculate the difference in the feature space.
\vspace{-1mm}
\subsection{Results} \label{results}
\vspace{-1mm}

We show the performance of \texttt{RODD} in Tables \ref{tab:table1} and \ref{tab:table2} for CIFAR-10 and CIFAR-100, respectively. Our method achieves an FPR95 improvement of 21.66\%, compared to the most recently reported SOTA~\cite{du2022vos}, on CIFAR-10. We obtain similar performance gains for CIFAR-100 dataset as well. For \texttt{RODD}, the model is first pre-trained using self-supervised adversarial contrastive learning \cite{jiang2020robust}. 
We fine-tune the model following the training settings in \cite{zagoruyko2016wide}.

\vspace{-1mm}
\section{Ablation Studies}
\vspace{-1mm}
\label{ablation}
In this section, we conduct extensive ablation studies to evaluate the robustness of \texttt{RODD} against corrupted ID and OOD test samples. Firstly, we apply the 14 corruptions in \cite{hendrycks2019robustness} on OOD data to generate \emph{corrupted} OOD (OOD-C). Corruptions introduced can be benign or destructive based on thier intensity which is defined by their severity level. To do comprehensive evaluations, 5 severity levels of the corruptions are infused. By introducing such corruptions in OOD datasets, the calculated mean detection error for both CIFAR-10 and CIFAR-100 is 0$\%$, which highlights the inherit property of \texttt{RODD} that it shifts perturbed OOD features further away from the ID as shown in \texttt{t-SNE} plots in Fig. \ref{tsne} which shows that perturbing OOD improves the \texttt{RODD}'s performance. Secondly, we introduced corruptions~\cite{hendrycks2019robustness} in the ID test data while keeping OOD data clean during testing. The performance of \texttt{RODD} on corrupted CIFAR-100 (CIFAR100-C) has been compared with \texttt{VOS}\cite{du2022vos} in Table \ref{tab:table3}. Lastly, we compared the classification accuracy of our proposed method  with the baseline WideResNet model \cite{zagoruyko2016wide} on clean and corrupted ID test samples in Table \ref{tab:table4}. \texttt{RODD} has improved accuracy on corrupted ID test data as compared to the baseline with a negligible drop on classification accuracy of clean ID test data.
\vspace{-1mm}
\section{Conclusion}
\label{conclusion}
\vspace{-1mm}
In this work, we have proposed that in-distribution features can be aligned in a narrow region of the latent space using constrastive pre-training and uni-dimensional feature mapping. With such compact mapping, a representative first singular vector can be calculated from the features for each in-distribution class. The cosine similarity between these computed singular vectors and an extracted feature vector of the test sample is then estimated to perform OOD test. We have shown through extensive experimentation that our method achieves SOTA OOD detection results on CIFAR-10 and CIFAR-100 image classification benchmarks.
\vspace{-1mm}
\section{Acknowledgement}
\vspace{-1mm}
This research is based upon work supported by Leonardo DRS and partly by the National Science Foundation under Grant No. CCF-1718195 and ECCS-1810256.\\
\newpage
{\small
\bibliographystyle{ieee_fullname}
\bibliography{main}
}

\end{document}